\definecolor{mygreen}{RGB}{63,126,63}
\definecolor{mygray}{RGB}{242,242,242}
\definecolor{myblue}{RGB}{25,25,112}
\definecolor{mymauve}{RGB}{161,113,136}
\definecolor{mypurple}{RGB}{108,48,130}
\lstdefinelanguage{Coq}{
,morekeywords={match,end,Definition,Inductive,Lemma,Theorem,Record,
               Hypothesis,Variable,Section,End,case,of,is,let,in,do,return,with,Extract,Constant,Inlined,Inline,Extraction,Fixpoint,Program,Function,Fix,Class,Local,Output,Input,Proof,Qed}
,morecomment=[s]{(*}{*)}
,keywordstyle=\bfseries\color{myblue}
,rulecolor=\color{black}
,commentstyle={\color{black}}
,columns=fullflexible
,numberstyle=\tiny\color{gray}
,escapeinside={@}{@}
,belowskip=-1\baselineskip
}
\newtheorem*{theorem*}{Theorem}
\newtheorem*{definition*}{Definition}
\title{Towards Autoformalization of Mathematics and Code Correctness: Experiments with Elementary Proofs}
\author{Garett Cunningham \\
  School of EECS\\
  Ohio University\\
  Athens, OH 45701\\
  \texttt{gc974517@ohio.edu} \\
 \\\And
  Razvan C. Bunescu \\
  Department of Computer Science\\
  UNC Charlotte\\
  Charlotte, NC 28223\\
  \texttt{razvan.bunescu@uncc.edu} \\\And
  David Juedes \\
  School of EECS\\
  Ohio University\\
  Athens, OH 45701\\
  \texttt{juedes@ohio.edu} \\}
\begin{document}
\maketitle
\begin{abstract}
The ever-growing complexity of mathematical proofs makes their manual verification by mathematicians very cognitively demanding. Autoformalization seeks to address this by translating proofs written in natural language into a formal representation that is computer-verifiable via interactive theorem provers. In this paper, we introduce a semantic parsing approach, based on the Universal Transformer architecture, that translates elementary mathematical proofs into an equivalent formalization in the language of the Coq interactive theorem prover. The same architecture is also trained to translate simple imperative code decorated with Hoare triples into formally verifiable proofs of correctness in Coq. Experiments on a limited domain of artificial and human-written proofs show that the models generalize well to intermediate lengths not seen during training and variations in natural language.
\end{abstract}

\section{Introduction}

To the uninitiated, the notion of mathematical proof represents simply an argument written by people to convince others of mathematical truth.    However, in a real sense, mathematical proof must have formal underpinnings that go beyond the written argument.    Arguments that lack such underpinnings might have fatal errors or even logical inconsistencies (see, for example, Russell's Paradox \citep{sep-russell-paradox}).   Nevertheless, mathematical arguments written in natural language are the norm and they have great value.

In \citet{Tymoczko1979-TYMTFP}'s well-known paper that discusses a somewhat controversial (at the time) proof of the Four Color Theorem \citep{AppelHaken1,AppelHaken2}, he explores ``what is a mathematical proof?''  He posits that all mathematical proofs must be (i) convincing, (ii) surveyable, and (iii) formalizable.   The first two points are for the reader---proofs must be convincing to and comprehensible by mathematicians.   For the third point, he notes that, ``Most mathematicians and philosophers believe that any acceptable proof can be formalized. We can always find an appropriate formal language and theory in which the informal proof can be embedded and `filled out' into a rigorous formal proof.''   For most mathematicians, this third part is crucial for ensuring that subtle, but fatal, errors in logic do not exist in mathematical proof.

Great progress has been made since the 1970's in fully formalizing significant mathematical results.   For instance, the Feit-Thompson Theorem \citep{Gonthier_Odd_Order, Gonthier20131} and the Four Color Theorem \citep{Gonthier08formalproof} have been formally verified using the proof assistant Coq \cite{bertot2013interactive}, and the Kepler Conjecture \citep{Hales2005,Hales2017} has been formally verified using the proof assistants Isabelle and HOL Light \citep{nipkow2002isabelle}. Moreover, proof assistants have demonstrated immense utility for software verification, such as the full certification of a C compiler \citep{leroy2009formal}. Proofs demonstrating the correct behavior of code share a similar structure to proofs in pure mathematics, where systems like Hoare logic replace standard first-order logic. Thus, Tymoczko's criteria for mathematical proof can be extended to the verification of programs. For many experts, LaTeX  provides an excellent tool for satisfying the first two criteria.   In addition, carefully written LaTeX  \citep{higham2020} provides a rich structure for establishing the third criterion.

The vast majority of modern mathematics is expressed using natural language (NL), with the overwhelming majority typeset in LaTeX.   Fully formalizing mathematics using proof assistants is still a difficult and time consuming task.    This paper takes some preliminary steps toward bridging this gap
by exploring how modern machine learning techniques 
can be used to convert carefully written LaTeX  into equivalent, and formally verified mathematics in Coq, a process referred to as \emph{autoformalization} in the literature \citep{Szegedy2020}.  

\citet{latex_mizar2018,latex_mizar} explored the similar task of translating mathematical statements from LaTeX into Mizar, using LSTM-based models with attention. To generate aligned LaTeX-Mizar pairs, they use a tool \cite{bancerek:mm06} that translates top-level Mizar statements into artificial LaTeX sentences, a task that is facilitated by the fact that Mizar is human readable and similar in length with the corresponding LaTeX version. \citet{carman2021translating} evaluated the competency of LSTMs toward formalizing a restricted set of artificially generated theorems about simple arithmetic expressions, reporting reasonable success over expression lengths seen during training. More recently, \citet{wu2022autoformalization} evaluated Codex and PaLM on a significantly more limited, but human-written set of theorems in algebra and number theory.


In contrast to prior work, we address the autoformalization of both {\it theorems and their proofs}, and extend the scope to {\it proofs of code correctness}. We use a number of manually written mathematical statements to abstract a complex grammar that is then used to generate a dataset of substantially longer and more diverse mathematical theorems and proofs. We develop an architecture based on the Universal Transformer \citep{dehghani2018universal} and adapt a copying mechanism \citep{gu2016incorporating} to handle arbitrary numbers and variable names at test time. The models are evaluated extensively on their ability to systematically generalize to statement lengths not seen during training, for which we report {\it sequence-level} accuracy as well as a \emph{semantic-level} accuracy calculated by combining sequence-level accuracy for the theorem and running Coq to determine if the generated proof is correct. Code and data are made available at \href{https://github.com/gc974517/autoformalization}{https://github.com/gc974517/autoformalization}.



\section{Dataset of Theorems and Proofs}
\label{sec:data}

We create two independent datasets of mathematical statements that overall correspond to four classes of theorems and proofs: the first dataset contains three classes of arithmetic statements ({\sc even-odd}, {\sc composites}, and {\sc powers}), described in detail in Section~\ref{sec:dataset-math}, and the second dataset containing statements about code correctness via Hoare logic ({\sc poly}), described in detail in Section~\ref{sec:dataset-code}. In each example, the input theorem-proof pair is given in LaTeX, whereas the formalized output is represented in Coq. This work focuses on the proof assistant Coq \cite{bertot2013interactive} because (a) there is a rich set of mathematical libraries that have been developed for it, (b) it has been used successfully to reason about significant computation artifacts, such as the ComperCert C compiler \cite{leroy2009formal}), and (c) it benefits from a rich set of training material for the proof assistant related to software verification \cite{pierce2010software}.

Each class of examples demonstrates features necessary for the successful autoformalization of mathematical theorems and proofs. For example, \textsc{powers} and \textsc{composites} examples may define useful terminology to make the theorems shorter, e.g. proving that 4 is a {\it square}, or conversely they may state theorems directly without any preliminary definitions, e.g. proving $\exists n. \, n^2 = 4$. As shown in Figures~\ref{fig:composites-example} and~\ref{fig:powers-example},  this corresponds in Coq to aliasing propositions using the \texttt{Definition} keyword. Additionally, the examples in the dataset provide a stress test of the copying mechanism described in Section~\ref{sec:copying}, testing its ability to learn the correct order and number of terms to include in mathematical expressions, as well as their placement in theorems and proofs, in a way that generalizes to arbitrary tokens in mathematical language.

For each of the four classes of theorems and proofs, we manually created a few examples ourselves in order to guide the construction of a complex grammar that is then used to generate a dataset of substantially longer and more diverse mathematical theorems and proofs. Each dataset is generated using its corresponding grammar in an identical way. First, a random seed is sampled that controls the overall structure of the theorem, proof, and definition, if any. Then, the skeleton structure of the proof is completed with phrases that are sampled from a separate context-free grammar. The coarse control of the skeleton structure allows the construction of examples with interesting features like sublemmas, forward or backward proof direction, coreference, or additional conditions for the theorem, among others. 

\begin{figure}[t!]
    \centering
    \small
    \begin{tabular}{p{0.45\textwidth}}
        \multicolumn{1}{c}{LaTeX Input Sequence} \\
        \cline{1-1}
        \begin{theorem*}
            $28 M + 308$ is even.
        \end{theorem*}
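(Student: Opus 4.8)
The plan is to prove the statement directly from the definition of evenness, namely that a natural number $n$ is even precisely when there exists some $k$ with $n = 2k$. First I would rewrite $28M + 308$ so that a factor of $2$ is made explicit, observing that both the coefficient $28$ and the constant $308$ are themselves even, hence $28M + 308 = 2(14M + 154)$. Taking $k = 14M + 154$ then witnesses the existential and closes the goal in a single step.

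An equivalent route, closer to how one might phrase the natural-language argument, is to split the expression into its two summands: $28M$ is even because $28 = 2 \cdot 14$ gives $28M = 2(14M)$, and $308$ is even because $308 = 2 \cdot 154$. Invoking the elementary fact that a sum of two even numbers is even then yields the result. This phrasing mirrors a typical textbook proof, but it leans on an auxiliary closure lemma, whereas the single-factorization approach discharges everything at once; which of the two I would pick depends on matching the style expected by the target formalization.

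Since the statement is arithmetically trivial, the real difficulty is not mathematical but lies in the formalization target. The autoformalizer must emit the correct existential witness $14M + 154$ and then verify the numerical identity $28M + 308 = 2(14M + 154)$. In Coq this amounts to supplying the witness with \texttt{exists} and discharging the remaining equation with a decision procedure such as \texttt{ring} or \texttt{lia}. I expect the main obstacle to be exactly this bookkeeping, getting the coefficient $14$ and the constant $154$ right and placed correctly, since an off-by-a-factor error in the witness produces a well-formed but unprovable goal, and the copying mechanism must reproduce the specific numerals $28$ and $308$ faithfully rather than defaulting to values seen during training.
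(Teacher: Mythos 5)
Your second route is exactly the paper's proof: the generated natural-language argument shows $308$ is even, shows $28M$ is even because the coefficient $28$ is even, and closes with the lemma that a sum of even numbers is even; the paired Coq proof mirrors this with \texttt{rewrite Nat.even\_add}, an assertion \texttt{Nat.even 308 = true}, and an assertion \texttt{Nat.even (28 * M) = true} obtained via \texttt{Nat.even\_mul}. Your first route --- factoring out $2$ to get $28M + 308 = 2(14M + 154)$ and supplying the witness $14M + 154$ --- is mathematically fine but is not what the paper does: the formalization target here is the Boolean predicate \texttt{Nat.even} together with its rewriting lemmas, not an existential definition of evenness, so there is no \texttt{exists} to discharge and no witness to compute. (The paper's \textsc{even-odd} class deliberately keeps the Coq proof structurally aligned with the step-by-step natural-language decomposition, which is why the closure-lemma phrasing is the one that matches; a witness-based proof would only be credited under the paper's semantic-level metric, not its sequence-level one.) Your observation that the real difficulty is faithful copying of the numerals $28$ and $308$ and of the variable $M$ is on target and is precisely what the paper's copying mechanism with \texttt{<nat>}/\texttt{<var>} placeholders is designed to handle.
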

        \begin{proof}
            We know the summation between even numbers in $\mathbb{N}$ will be an even number. Observe that $308$ is known to be even. Additionally, note that the pair $M \times 28$ is trivially even. This is true because the coefficient $28$ is even.
        \end{proof} \\
        \multicolumn{1}{c}{Coq Output Sequence} \\
        \cline{1-1}
        \begin{lstlisting}
Require Import Arith.
Theorem M28_308:
  forall M : nat, Nat.even(28 * M + 308) = true.
Proof.
  intros.
  repeat rewrite Nat.even_add.
  assert (H1: Nat.even 308 = true).
  { auto. }
  assert (H2: Nat.even 28 = true).
  { auto. }
  assert (H3: Nat.even (28 * M) = true).
  { rewrite Nat.even_mul.
    rewrite H2.
    auto. }
  rewrite H1.
  rewrite H3.
  auto.
Qed.
        \end{lstlisting}
    \end{tabular}
    \vspace*{-3em}
    \caption{Generated example from the \textsc{even-odd} set.}
    \label{fig:ltc-example}
\end{figure}

\begin{figure}[t!]
\small
    \begin{tabular}{p{0.45\textwidth}}
        \multicolumn{1}{c}{LaTeX Input Sequence} \\
        \cline{1-1}
        \begin{theorem*}
            $450 + a \cdot 192 + j \cdot 462$ is guaranteed to be even for any natural terms $j$, and $a$.
        \end{theorem*}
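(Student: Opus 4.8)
The plan is to reduce the goal to checking the parity of three independent summands and then invoke closure of evenness under addition. Writing the expression as $450 + (a \cdot 192) + (j \cdot 462)$, I would first recall the elementary fact that a sum of even naturals is again even; this lets me conclude the whole sum is even provided each of the three terms is individually even, uniformly in $a$ and $j$.

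For the constant term I would observe directly that $450$ is even, since $450 = 2 \cdot 225$. For the two products I would appeal to the fact that a product of naturals is even whenever at least one factor is even. Here the coefficients $192$ and $462$ are themselves even, so $a \cdot 192$ and $j \cdot 462$ are even for every choice of the natural numbers $a$ and $j$, regardless of the variable factor. This mirrors exactly the reasoning in the \textsc{even-odd} example of Figure~\ref{fig:ltc-example}, where $308$ is handled as an even constant and $28 \cdot M$ as an even-coefficient product.

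Combining the three parity facts, the expression is a sum of three even numbers and is therefore even, which closes the goal. Since the statement is elementary, the only real subtlety is bookkeeping rather than mathematics: I must apply the additive closure lemma across all three summands, not merely a binary split, and respect operator precedence so that each $\cdot$-term is grouped correctly before its parity is assessed. In the corresponding Coq formalization the same bookkeeping reappears as choosing the right rewrite order---first distributing \texttt{Nat.even\_add} over the sum, then discharging each product with \texttt{Nat.even\_mul}---so the main obstacle is sequencing these rewrites faithfully rather than carrying out any nontrivial inference.
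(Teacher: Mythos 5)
Your proof is correct and rests on the same two facts as the paper's: a product of naturals is even whenever one factor is even (applied to the even coefficients $192$ and $462$), and evenness is closed under addition. The one genuine difference is the decomposition. You treat the expression as a flat three-way sum and insist on applying the additive closure lemma "across all three summands, not merely a binary split." The paper's proof does the opposite: it is deliberately a binary split, first establishing the sublemma that $192 \cdot a + 462 \cdot j$ is even (from the evenness of each product), and only then combining that intermediate fact with the constant $450$ --- indeed it never even bothers to state that $450$ is even, letting that fall out of the final step. This nesting is not an accident of exposition: the example in Figure~\ref{fig:coref-example} exists precisely to illustrate sublemma use, and the accompanying Coq proof mirrors it with the assertion \texttt{H3} for \texttt{Nat.even (192 * a + 462 * j) = true} built from \texttt{H1} and \texttt{H2} before the top-level \texttt{rewrite Nat.even\_add} is discharged. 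Your flat version is mathematically just as valid and arguably more symmetric, but it would correspond to a differently structured Coq target (a \texttt{repeat rewrite Nat.even\_add} over all three summands with no intermediate assertion), so in the paper's setting --- where the formal proof is required to track the structure of the natural-language proof --- the binary-with-sublemma shape is the intended one. Your closing observation that the real difficulty is sequencing the rewrites rather than any nontrivial inference is exactly right either way.
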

        \begin{proof}
            It can be justified that $192 \cdot a + j \cdot 462$ is trivially even. Note that $192 a$ is an even number in $\mathbb{N}$ because multiplying between an even integer with an arbitrary number in $\mathbb{N}$ is guaranteed to be even. Likewise, $462 j$ is trivially an even number in $\mathbb{N}$. The claim is proven as a consequence of the fact that the sum of even numbers with an even number will be in itself an even number. Therefore, our theorem holds.
        \end{proof} \\
        \multicolumn{1}{c}{Coq Output Sequence} \\
        \cline{1-1}
        \begin{lstlisting}
Require Import Arith.
Theorem a450_192j_450_even:
  forall j a : nat,
  Nat.even (192 * a + 462 * j + 450) = true.
Proof.
  intros.
  rewrite Nat.even_add.
  assert (H1: Nat.even (192 * a) = true).
  { rewrite Nat.even_mul.
    auto. }
  assert (H2: Nat.even (462 * j) = true).
  { rewrite Nat.even_mul.
    auto. }
  assert (H3: Nat.even
    (192 * a + 462 * j) = true).
  { repeat rewrite Nat.even_add.
    rewrite H1.
    rewrite H2.
    auto. }
  rewrite H3.
  auto.
Qed. 
        \end{lstlisting}
    \end{tabular}
    \vspace*{-3em}
    \caption{Instance of sublemma use in the \textsc{even-odd} dataset. The proof that the sum of non-constant terms is even (assertion \lstinline|H3|) is given before proving the theorem.}
    \label{fig:coref-example}
\end{figure}

Many of the difficulties in formalizing mathematical statements from NL into Coq stem from the wide variability in the level of detail of mathematical proofs, and the frequent mismatch between what is considered an acceptable inference step in NL proofs vs. an inference step in Coq. 
Furthermore, there may be multiple Coq proofs for any given theorem, at different levels of granularity. We address this ambiguity by requiring the structure of the Coq proof to match the overall structure of the NL proof. This is achieved by quasi-synchronously generating the LaTeX and Coq versions of mathematical statements, while still allowing for some simple re-orderings in order to improve generalization performance, e.g. swapping arguments of commutative operations.

In total, the grammar-based method for generating examples can theoretically produce over 283 million unique arithmetic examples and over 491,000 unique code examples, before considering variations in phrasing by sampling from the context-free grammar.

\subsection{Arithmetic Statements}
\label{sec:dataset-math}

We generated three classes of mathematical statements, i.e. theorem-proof pairs:
\begin{itemize}[topsep=4pt,itemsep=2pt,partopsep=2pt, parsep=2pt]
    \item \textsc{even-odd}: an expression is even or odd.
    \item \textsc{composites}: a number is composite.
    \item \textsc{powers}: a number is an integer power of $n$.
\end{itemize}
\textsc{even-odd} examples contain arithmetic expressions of $n$ variables with even coefficients that are summed with a constant term, meaning that the parity of this constant determines the parity of the whole expression. Proofs make use of this fact with varying rigor based on our manually designed grammar, an example of which is shown by Figure \ref{fig:ltc-example}. The Coq program is generated concurrently with the paired LaTeX example. The example shown in Figure~\ref{fig:coref-example} illustrates the generation and use of prior facts to prove an implicit sublemma, in both the natural language and matching Coq version.

Examples of theorems and proofs for \textsc{powers} and \textsc{composites} share a similar structure in both their LaTeX and Coq forms, as shown in Figures~\ref{fig:composites-example} and \ref{fig:powers-example}, respectively. The theorems assert the existence of a natural number such that a defining property holds and their proofs are constructive, with the distinction that examples for composites prove factorization into $n$ factors. 

\begin{figure}[t!]
    \centering
    \small
    \begin{tabular}{p{0.45\textwidth}}
        \multicolumn{1}{c}{LaTeX Input Sequence} \\
        \cline{1-1}
        \begin{definition*}
            We define that $w \in \mathbb{N}$ is a \emph{composite} natural number if taking some $R$, $Q \in \mathbb{N}$ we have $Q$, $R \geq 2$ and $Q \times R = w$.
        \end{definition*}
        \begin{theorem*}
            $35$ is a \emph{composite} whole number.
        \end{theorem*}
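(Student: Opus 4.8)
The plan is to unfold the definition of \emph{composite} and supply an explicit factorization of $35$. According to the definition, showing that $35$ is composite reduces to exhibiting two natural numbers $Q$ and $R$ with $Q \geq 2$ and $R \geq 2$ such that $Q \times R = 35$. Because the statement is purely existential, the natural route is a constructive proof: commit to concrete witnesses and then verify the three defining conditions.

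First I would observe that $35 = 5 \times 7$, which immediately suggests the witnesses $Q = 5$ and $R = 7$ (the swapped choice $Q = 7$, $R = 5$ works equally well by commutativity of multiplication). Once these values are fixed, the remaining work is to discharge the conjunction in the definition, namely $Q \geq 2$, $R \geq 2$, and $Q \times R = 35$. Each conjunct is a routine numeric check---$5 \geq 2$, $7 \geq 2$, and $5 \times 7 = 35$---and none of them requires case analysis or induction.

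In the Coq formalization this corresponds to introducing the existential witnesses (via \texttt{exists 5} and \texttt{exists 7}), splitting the resulting conjunction into its three parts, and closing each subgoal by evaluation (e.g.\ \texttt{auto} or \texttt{reflexivity} after reduction). The only step carrying genuine mathematical content is the choice of factorization itself; after the witnesses are supplied, every obligation collapses to a decidable fact about concrete numerals. The main obstacle, then, is not verification but \emph{search}: identifying a nontrivial factor pair of the target number. For $35$ this is immediate, but in the general autoformalization setting it is precisely the step a generated proof must get right, since an otherwise plausible but incorrect witness---such as $Q = 1$, which satisfies $Q \times R = 35$ yet violates $Q \geq 2$---would fail to meet the definition.
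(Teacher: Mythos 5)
Your proof matches the paper's essentially exactly: the paper's generated proof also unfolds the definition of \emph{composite}, supplies the witnesses $7$ and $5$ (via \texttt{exists 7} and \texttt{exists 5} for $R$ and $Q$), and discharges the remaining numeric conditions automatically (the paper uses a single \texttt{lia} call rather than splitting the conjunction, a purely tactical difference). The choice of factorization and the overall constructive structure are the same.
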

        \begin{proof}
            Remember that a composite natural number is the multiplication between $Q$ and $R$ such that $Q$ and $R \geq 2$. Allow $R = 7$, $Q = 5$. We justify the result is valid as $35 = Q \cdot R$.
        \end{proof} \\
        \multicolumn{1}{c}{Coq Output Sequence} \\
        \cline{1-1}
        \begin{lstlisting}
Require Import Lia.
Definition composite
  (w : nat) :=
    exists R Q : nat,
    (Q >= 2) /\ (R >= 2)
    /\ (Q * R = w).
Theorem w_composite:
  composite 35.
Proof.
  unfold composite.
  exists 7.
  exists 5.
  lia.
Qed.
        \end{lstlisting}
    \end{tabular}
    \vspace*{-3em}
    \caption{Generated \textsc{composites} example.}
    \label{fig:composites-example}
\end{figure}

\begin{figure}[t!]
    \centering
    \small
    \begin{tabular}{p{0.45\textwidth}}
        \multicolumn{1}{c}{LaTeX Input Sequence} \\
        \cline{1-1}
        \begin{definition*}
            We say $o$ is a \emph{square} whenever there exists some whole number $Z$ such that $Z \geq 2$ and $o = Z^2$.
        \end{definition*}
        \begin{theorem*}
            $o = 64$ is a \emph{square}.
        \end{theorem*}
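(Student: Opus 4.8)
The plan is to give a constructive existence proof, mirroring the structure of the \textsc{composites} example. By the \emph{square} definition just introduced, showing that $64$ is a square reduces to exhibiting a single witness: a whole number $Z$ with $Z \geq 2$ and $64 = Z^2$. The natural candidate is $Z = 8$, since $8^2 = 64$ and $8 \geq 2$, so no search, factoring, or case analysis is required.

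Concretely, I would first unfold the definition of \emph{square} to expose the existential goal $\exists Z.\, Z \geq 2 \wedge o = Z^2$, with $o$ specialized to $64$. I would then supply the witness $Z = 8$ (an \texttt{exists 8} step in the Coq version), which leaves the two conjuncts $8 \geq 2$ and $64 = 8^2$ to discharge. The first is a trivial linear inequality; the second is a concrete numerical identity. In the natural-language rendering this corresponds to a sentence of the form ``let $Z = 8$; then $8 \geq 2$ and $64 = 8^2$,'' exactly paralleling the ``Allow $R = 7$, $Q = 5$'' move in the composites proof.

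The main obstacle is the nonlinear term $Z^2$. Whereas the \textsc{composites} proof closes with a single call to \texttt{lia}---linear integer arithmetic suffices once the two factors are fixed---squaring is not linear, so \texttt{lia} alone will not settle $64 = 8^2$. I therefore expect the proof to need an explicit reduction of $8^2$ to $64$ by computation, for instance a \texttt{simpl} or \texttt{reflexivity} step, or rewriting $Z^2$ as $Z \times Z$, before the remaining goals can be closed. From the autoformalization standpoint, the step I would watch most carefully is whether the model emits exactly this extra reduction rather than blindly reusing the purely linear closing tactic of the composites template.
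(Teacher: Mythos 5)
Your proposal matches the paper's proof: the paper's natural-language version reads ``Let $Z = 8$. Observe that $64 = 8^2$. Also notice $Z = 8$ is more than or equal to 2,'' and the Coq version is exactly \texttt{unfold square; exists 8}, discharge $8 \geq 2$ (via \texttt{lia}), and close $64 = 8^2$ by computation (the \texttt{repeat split} step applies \texttt{eq\_refl} to the convertible equality, playing the role of your anticipated \texttt{reflexivity}). Your observation that the purely linear closing tactic of the \textsc{composites} template would not suffice here is correct and is precisely why the paper's \textsc{powers} proof deviates from the single-\texttt{lia} pattern.
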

        \begin{proof}
            Let $Z = 8$. Observe that $64 = 8^2$. Also notice $Z = 8$ is more than or equal to 2. This yields 64 is a square whole number.
        \end{proof} \\
        \multicolumn{1}{c}{Coq Output Sequence} \\
        \cline{1-1}
        \begin{lstlisting}
Require Import Lia.
Definition square
  (o : nat) :=
    exists Z : nat,
    (Z >= 2) /\ (o = Z^2).
Theorem square_64:
  square 64.
Proof.
  unfold square.
  exists 8.
  assert (H1: 8 >= 2).
  { lia. }
  repeat split.
  apply H1.
Qed.
        \end{lstlisting}
    \end{tabular}
    \vspace*{-3em}
    \caption{Generated example from the \textsc{powers} set.}
    \label{fig:powers-example}
\end{figure}

For both training and testing, we generate 5,000 even-odd, 5,000 composites, and 2,000 powers examples. We train on values of $n \in \{2, 3, 5, 7, 9\}$ and test on values $n \in \{2, 3, \ldots, 12\}$, where $n$ represents the number of variables in the arithmetic expression, the number of factors, or the power, respectively. This is done in order to evaluate the model's ability to generalize to unseen arithmetic expression lengths and numbers of factors.



\begin{figure*}[t!]
    \centering
    \begin{tabular}{p{0.45\textwidth}cp{0.45\textwidth}}
        \multicolumn{1}{c}{LaTeX Input Sequence} && \multicolumn{1}{c}{Coq Output Sequence} \\
        \cline{1-1}\cline{3-3}
        \begin{theorem*}
            Consider the following series of commands such that
            \begin{lstlisting}
S := 3;
S := 3 + S * Z;
S := 1 + S * Z\end{lstlisting}
            Allow $Z = y$, for any natural number $y$, ahead of running this code then $S = 3 \times y^2 + 3 \times y + 1$ after the set of instructions has executed.
        \end{theorem*}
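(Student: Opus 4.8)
The plan is to verify this Hoare triple by pushing the precondition $Z = y$ forward through the three assignments and tracking the value of $S$ as a polynomial in $y$. The key preliminary observation is that $Z$ is never reassigned, so it retains the value $y$ throughout the program; I would record this as an invariant and then reason statement by statement.

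First I would compute the effect of each assignment in sequence. After \texttt{S := 3} we have $S = 3$; after \texttt{S := 3 + S * Z} the substitution of the current values gives $S = 3 + 3y$; and after the final \texttt{S := 1 + S * Z} we obtain $S = 1 + (3 + 3y)\,y$. Expanding this product yields $1 + 3y + 3y^2$, which is exactly the claimed postcondition $3y^2 + 3y + 1$. In a formal Coq proof I would instead reason backward from the postcondition using the assignment axiom $\{Q[E/x]\}\; x := E \;\{Q\}$ together with the rule of composition, peeling the statements off from last to first. This produces the weakest precondition $1 + (3 + 3Z)\,Z = 3y^2 + 3y + 1$, which I would then discharge against the actual precondition $Z = y$ via the rule of consequence.

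The one genuinely nontrivial step is the final arithmetic obligation: after substituting $Z = y$ I must establish $1 + (3 + 3y)\,y = 3y^2 + 3y + 1$. Because this is a nonlinear identity in $y$, a linear decision procedure will not close it, so I would appeal to \texttt{ring} (or \texttt{nia}), which settles the polynomial equality directly; note that no truncated natural-number subtraction appears, so \texttt{ring} applies cleanly. The only place demanding care is keeping the backward substitutions consistent with Coq's encoding of $y^2$, after which the verification is routine.
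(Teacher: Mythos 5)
Your proposal is correct and matches the paper's proof in essence: the paper likewise pushes the precondition forward through the three assignments, carrying $Z = y$ as an invariant and recording the decorated intermediate assertions $S = 3$, $S = 3y + 3$, and $S = 3y^2 + 3y + 1$, and its Coq realization uses \texttt{hoare\_seq} with these assertions plus the assignment axiom and rule of consequence, just as you describe. The only (immaterial) difference is at the tactic level, where the paper discharges the final arithmetic side conditions with \texttt{assn\_auto''} rather than \texttt{ring} or \texttt{nia}.
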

        \begin{proof}
            By application of usual Hoare logic:
            \begin{eqnarray*}
                \{Z = y\} \\
                \text{\lstinline|S := 3;|} \\
                \{Z = y \land S = 3\} \\
                \text{\lstinline|S := 3 + S * Z;|} \\
                \{Z = y \land S = 3 \times y + 3\} \\
                \text{\lstinline|S := 1 + S * Z|} \\
                \{Z = y \land S = 3 \times y^2 + 3 \times y + 1\}
            \end{eqnarray*}
            Hence, this program is shown to be correct.
        \end{proof} &&
        \begin{lstlisting}
Require Import String.
From PLF Require Import Imp.
From PLF Require Import Hoare.
Theorem poly_code_correct:
  forall y : nat,
  {{ Z = y }}
  S := 3;
  S := 3 + S * Z;
  S := 1 + S * Z
  {{ S = 3 * y ^ 2 + 3 * y + 1 }}.
Proof.
  intros.
  apply hoare_seq with
    (Q := (
      (Z = y /\ S = 3)
    )%assertion).
  apply hoare_seq with
    (Q := (
      (Z = y /\ S = 3 * y + 3)
    )%assertion).
  apply hoare_seq with
    (Q := (
      (Z = y /\ S = 3 * y^2 + 3 * y + 1)
    )%assertion).
  all: eapply hoare_consequence_pre;
  try (apply hoare_asgn || assn_auto'').
Qed.
        \end{lstlisting}
    \end{tabular}
    \vspace*{-2em}
    \caption{Generated \textsc{poly} example: [Left] the Hoare logic proof; [Right] the code correctness proof in Coq.}
    \label{fig:poly-example}
\end{figure*}

\subsubsection{Handwritten Examples}

We also created a small collection of 45 human-written LaTeX theorem-proof pairs to evaluate performance on examples outside of our manually generated grammar. These are distinct from the original manually written examples that were used to guide the development of the generative grammar. There are 15 examples for each type of proof from the arithmetic set, using the same vocabulary with a number of unseen grammatical structures.

\subsection{Code Correctness Statements}
\label{sec:dataset-code}

We create a dataset of correctness proofs about short programs written in the imperative programming language \emph{Imp} \citep{Pierce:SF2}, which we call \textsc{poly}. The programs represent various algorithms for evaluating a polynomial, and their proofs of correctness verify that the programs correctly model the polynomial as a mathematical function. Proofs are conducted as either fully decorated programs or as sequences of Hoare triples with natural language justifying steps in between. An example is shown in Figure~\ref{fig:poly-example}.

For both training and testing data, we generate 5,000 examples. We train on programs containing 2, 3, 5, 7, 9, and 11 lines, then test on programs containing from 2 up to 14 lines to evaluate the model's ability to generalize to novel program lengths.

\let\vec\mathbf
\section{Semantic Parsing Architecture}
\label{sec:model}

To formalize LaTeX statements into Coq,  we developed an encoder-decoder architecture based on the Universal Transformer \cite{dehghani2018universal}. Similar to \citet{csordas2021devil}, we do so by adding recursive passes into the encoder and decoder of a base Transformer \citep{vaswani2017attention}, thus making the model analogous to a Universal Transformer without adaptive computation time (ACT). Further, we introduce a copying mechanism and support for out-of-vocabulary mathematical terms.

\subsection{Copying Mechanism}
\label{sec:copying}

Mathematical language contains features uncommon or non-existent in natural language, such as numbers, variables, and carefully defined terminology. In order to address the use of general mathematical jargon, these tokens are replaced in the LaTeX input with generic forms denoting their usage, such as \texttt{<var1>} up to \texttt{<varN>} for variables, which effectively ensures {\it generalization to variable renaming} \cite{ferreira_be_2022}, \texttt{<nat1>} up to \texttt{<natN>} for numbers, or \texttt{<def>} for definitions, coupled with the use of a copying mechanism adapted from \citet{gu2016incorporating}. Note that a different generic token is introduced for each unique numerical constant or variable literal in the theorem and its proof, and the corresponding generic token is used in the Coq version. For example, considering the $\langle$LaTeX, Coq$\rangle$ pair in Figure~\ref{fig:composites-example}, \texttt{<nat1>}, \texttt{<nat2>}, \texttt{<nat3>}, and \texttt{<nat4>} would be used to replace the constants 2, 35, 7, and 5 respectively, everywhere in the LaTeX and Coq statements. Similarly, \texttt{<var1>}, \texttt{<var2>}, and \texttt{<var3>} were used to replace variable literals $w$, $R$, and $Q$. This is in contrast to using just two generic tokens \texttt{<nat>} and  \texttt{<var>} everywhere, which would make all numbers coreferent and all variables coreferent. Preliminary experiments validated the utility of encoding these distinctions while maintaining the correct coreference in both LaTeX and Coq statements. 

Overall, by using generic tokens for numbers, variables, and definitions, only a limited set of embeddings need to be trained and the model is forced to utilize contextual information in order to appropriately copy tokens into the Coq output. In this way, the model has the ability to generalize to unseen numbers or variable and definition names.

The original CopyNet \citep{gu2016incorporating} used an encoder-decoder architecture with a copying mechanism to calculate the probabilities of generating in-vocabulary tokens vs. copying tokens from the input sequence to the output. Our autoformalization task guarantees mutual exclusivity between generating (g) and copying (c) tokens, which allows using a simplified formula for calculating the probability of producing a token $y_t$ at time step $t$. Letting $\mathcal{V}_c$ denote the Coq vocabulary, $X$ denote the input sequence of LaTeX tokens, and $\mathcal{X}$ denote the collection of unique tokens in $X$, we calculate the probability of producing $y_t$ as:
\begin{align}
    p(y_t) &=
    \begin{cases}
        p(y_t, \text{g}) = \displaystyle\frac{1}{Z_t} e^{\psi_g(y_t)}, \hfill y_t \in \mathcal{V}_c \\
        p(y_t, \text{c}) = \displaystyle\frac{1}{Z_t} \displaystyle\sum_{x_j \in X: x_j = y_t} \!\!\! e^{\psi_c(x_j)}, \hfill y_t \in \mathcal{X}
    \end{cases} \nonumber
\end{align}
where $Z_t = \displaystyle\sum_{y_t \in \mathcal{V}_c} e^{\psi_g(y_t)} + \sum_{x_j \in X} e^{\psi_c(x_j)}$. The scoring functions 
are given by $\psi_g(y_t) = \vec{v}_{y_t}^\top \vec{W}_o \vec{s}_t$ and $\psi_c(x_j) = \tanh{\left(\vec{h}_j^\top \vec{W}_c\right)} \vec{s}_t$,
where $\vec{v}_{y_t}$ is a one-hot encoding of $y_t$, $\vec{h}_j$ is the hidden encoder state for the input token $x_j$, $\vec{s}_t$ is the decoder state at step $t$, and $\vec{W}_o$ and $\vec{W}_c$ are learnable parameters.


\subsection{Encoder-Decoder Architecture}

We diverge from the standard Transformer architecture in a few crucial ways:
\begin{itemize}[topsep=4pt,itemsep=2pt,partopsep=2pt, parsep=2pt]
    \item Probabilities are calculated via $p(y_t)$ above.
    \item Absolute positional encodings are removed.
    \item Self-attention uses relative positional representations as in \citet{shaw-etal-2018-self}.
    \item Stacks of $N$ encoder/decoder blocks have $T$ recurrent passes.
\end{itemize}
All other aspects of the model remain unchanged from the original Transformer. We emphasize relative positional information over absolute in our model architecture. Preliminary evaluations on the \textsc{even-odd} dataset showed that Transformer models that use absolute positional encodings obtain 0\% sequence-level accuracy on expression lengths that are not seen at training time. Removing reliance on absolute position resolves this type of systematic generalization. The use of relative positional encodings for the Transformer-based models was thus essential for achieving stronger systematic generalization, which also agrees with the findings of \citet{csordas2021devil} on other NLP tasks.


\section{Experimental Evaluations}
\label{sec:evaluation}

To evaluate the performance of trained models, we ran two primary experiments: first on the collection of arithmetic examples, then on the collection of code correctness examples.
All models are evaluated in terms of {\it sequence-level} accuracy, where an example is considered correctly processed only if the generated Coq sequence for both the theorem and its proof perfectly matches token by token the ground truth sequence. We also report {\it semantic-level} accuracy, for which the generated Coq theorem needs to attains a perfect sequence-level accuracy and the Coq engine verifies that the generated Coq proof truly proves the generated Coq theorm, regardless of whether it matches the ground truth version of the proof. This emphasizes that the model was able to capture the general meaning of the natural language proof by correctly translating the theorem and successfully proving it using the natural language version as a guide.

All experiments were performed on one NVIDIA RTX-A6000 GPU with 48GB of memory.

\subsection{Arithmetic Statements}
\label{sec:eval-math}

We evaluate a Transformer model on the full data combining \textsc{even-odd + composites + powers} and using both the theorem and its proof in each sequence. We tune a model with embedding and state sizes of 32, a feed forward width of 256, 4 encoder and decoder blocks with 4 recurrent passes, 4 attention heads, and a clipping value of 2 for self-attention. We trained this model over minibatches of size 20, optimized with Adam using $\beta_1 = 0.9$, $\beta_2 = 0.98$, $\varepsilon = 1e-9$, and an initial learning rate of 0.001, annealed by a factor of $1/\sqrt{10}$ based on training loss plateaus with a patience of 5 epochs. 

The results in Table \ref{tab:results} show that the model generalizes well to the intermediate lengths of $\{4, 6, 8\}$, with a small number of correctly translated examples longer than the maximum of 9 used in training. Otherwise, the model fails to generalize to longer unseen lengths, which is not surprising, given that Transformer models are known to fail dramatically at systematic generalization on longer inputs for various NLP tasks \cite{csordas2021devil}, or to incur substantial decrease in accuracy for longer symbolic integration problems \cite{welleck_symbolic_2022}. Switching to semantic-level evaluation leads to a significant increase in accuracy for \textsc{composites}, with a more modest increase for \textsc{even-odd}.

\begin{table}[t]
    \centering
    \begin{tabular}{c|rr|rr|r|}
        \multicolumn{1}{c}{} & \multicolumn{2}{c}{\textsc{even-odd}} & \multicolumn{2}{c}{\textsc{composites}} & \multicolumn{1}{c}{\textsc{poly}} \\
        \cline{2-6}
        \multicolumn{1}{c|}{$n$} & Seq & Sem & Seq & Sem & Both \\
        \hline
         2  &  99.6 &  99.8 &  76.7 &  97.6 & 100.0 \\
         3  &  99.4 &  99.6 &  64.6 &  94.2 & 100.0 \\
         4  &  99.4 &  99.4 &  56.1 &  93.9 & 82.1 \\
         5  &  99.2 &  99.6 &  54.9 &  94.4 & 99.2 \\
         6  &  98.8 &  98.8 &  57.1 &  94.3 & 45.1 \\
         7  &  99.1 &  99.5 &  58.5 &  93.4 & 96.5 \\
         8  &  93.8 &  94.0 &  53.5 &  88.3 & 15.7 \\
         9  &  98.6 &  98.6 &  53.7 &  93.7 & 98.2 \\
        10  &   7.0 &   7.0 &   1.2 &   1.6 & 35.6 \\
        11  &   0.0 &   0.0 &   0.0 &   0.0 & 93.5 \\
        12+ &   0.0 &   0.0 &   0.0 &   0.0 &  0.0 \\
        \cline{2-6}
        \multicolumn{2}{c}{} \\
        \cline{3-6}
        \multicolumn{2}{r|}{\textsc{powers}} & \multicolumn{2}{c}{Seq = 100.0} & \multicolumn{2}{c|}{Sem = 100} \\
        \cline{3-6}
    \end{tabular}
    \caption{Sequence-level (Seq) and semantic-level (Sem) accuracy (\%) on test examples, split by expression length, with the exception of \textsc{powers}.}
    \label{tab:results}
\end{table}

\subsection{Code Correctness Statements}

We extend our scope to include data representing proofs of program correctness using the language of Hoare logic. We train a separate model with the same embedding and state sizes, feed forward width, and learning rates as in Section~\ref{sec:eval-math}. Depth is increased to 8 encoder and decoder blocks with 8 recurrent passes, 8 attention heads, and a clipping value of 8. The model is trained over minibatches of size 1 with Adam, with a patience of 3 epochs.

The \textsc{poly} results shown in Table~\ref{tab:results} demonstrate that the model is able to generalize to program line counts of $\{4, 6, 8, 10\}$ unseen during training with diminishing returns as the program length grows, eventually failing to generalize for lengths longer than the maximum seen in training. We observe that increasing the depth of the model significantly improved generalization. A model with identical hyperparameters to the arithmetic experiment yielded less then half the sequence-level accuracy for intermediate program lengths. Therefore, further increasing the depth of the model could push performance closer to optimal generalization to intermediate lengths at the cost of significantly more computing resources. Additionally, \textsc{poly} examples are far less prone to non-fatal token swapping errors. We observe that semantic-level accuracy is identical to sequence-level, as all copying errors compromised the validity of the proof. Therefore, accuracies are shown as one column (Both).

\subsection{Handwritten Examples}

We also evaluate the semantic-level accuracy of the trained models on the collection of 45 human-written LaTeX theorem-proof pairs. This is done by manually verifying that the generated Coq theorem corresponds to the LaTeX version and that the subsequent proof is correct according to the Coq interpreter. The fully trained model achieved 53.3\% for both \textsc{even-odd} and \textsc{composites}, and 73.3\% for \textsc{powers}.

Mistakes in almost all cases are confined to the mishandling of out-of-vocabulary tokens, such as mis-copying a variable within a definition or the omission of an assertion in the proof tied to a term. The model otherwise generated syntactically sound Coq code. Mistakes strongly correlate with examples that deviate significantly from the grammatical structure of the artificial data. Thus, pre-trained language models as evaluated by \citet{wu2022autoformalization} or pre-training new models on mathematical corpora like MATH \citep{hendrycksmath2021} may serve to alleviate the problems caused by the scarcity of aligned natural and formal mathematics data.


\section{Concluding Remarks}

As we have seen, it is feasible to train machine learning models to perform autoformalization over very restricted domains of math and code correctness proofs. These models show capability to systematically generalize to new expression lengths and program sizes. Moreover, these models were able to translate previously unseen hand written natural language examples, albeit with lower accuracy.  We are hopeful that this approach can be applied to autoformalization of a larger segment of mathematics and code verification.

As mentioned by \citet{Szegedy2020}, "Autoformalization is not just a challenge: successful autoformalization would represent a breakthrough for general AI with significant implications in various domains." We see an especially significant impact in education, where  integration of autoformalization into proof assistants for introductory mathematics and software verification courses would enable the detection of missing steps or misconceptions in students' proofs.


\bibliography{refs}
\bibliographystyle{acl_natbib}

\appendix

\end{document}